\theoremstyle{definition}
\newtheorem{definition}{Definition}
\newtheorem{coro}{Corollary}
\newtheorem{prop}{Proposition}
\newenvironment{sketch}{%
  \proof}{\endproof}
\newcommand{\pre}{pre}
\newcommand{\pe}{eff^+}
\newcommand{\de}{eff^-}
\newcommand{\poss}{\diamond}
\newcommand{\ppre}{\poss\pre}
\newcommand{\pde}{\poss\de}
\newcommand{\ppe}{\poss\pe}
\newcommand{\con}{\mathcal{M}_{con}}
\newcommand{\rel}{\mathcal{M}_{rel}}
\newcommand{\joe}{\mathcal{M}_{join}}
\newcommand{\hum}{\mathcal{M}_H}
\begin{document}
%
\title{Model Elicitation through Direct Questioning}
\author{
Sachin Grover$^+$ \and David Smith \and Subbarao Kambhampati$^+$ \\[0.5ex]
$^+$Arizona State University, Tempe, AZ 85281 USA\\[0.5ex]
{\tt \{ sgrover6, rao  \} @ asu.edu, david.smith@psresearch.xyz }
}
\date{}
\nocopyright
\maketitle

\begin{abstract}
The future will be replete with scenarios where humans are robots will be working together in complex environments. Teammates interact, and the robot's interaction has to be about getting useful information about the human's (teammate's) model. There are many challenges before a robot can interact, such as incorporating the structural differences in the human's model, ensuring simpler responses, etc. In this paper, we investigate how a robot can interact to localize the human model from a set of models. We show how to generate questions to refine the robot's understanding of the teammate's model. We evaluate the method in various planning domains. The evaluation shows that these questions can be generated offline, and can help refine the model through simple answers.
\end{abstract}

Humans work in complex environments by choosing different actions to reach their goals. For human-aware AI to be useful, the automated agents need to support humans in these complex environments.  
Recently, there has been a lot of work to explain the decisions taken by the agents \cite{lime-ml,explanations-mrp,miller2019explanation}. However, there is a need for two-way interaction to work with human teammates.
Even though a teammate might not explain their actions, but a robotic agent might need to ask directed questions to understand different parts of the human model.
These questions can refine the generalized models acquired by working in the environment by using humans behavioral data such as plan traces \cite{gil1994learning,stern2017efficient,zhuo2020discovering,garridolearning}.
Thus, in this paper, we look at how a robotic agent interacts with a teammate through directed questions, and simple answers can refine the model.

There are many challenges for asking directed questions, such as questions should be easy enough to understand, or it shouldn't overwhelm the teammate. Furthermore, the lack of understanding about the way humans think or handle information makes it difficult to ask questions. The {\em education} community has spent decades to understand how students conceptualize or interpret knowledge \cite{ortony1977representation},
such as, knowledge can be in the form of concept maps \cite{novak2006theory} or in the form of causal rules \cite{shultz1982rules}, or the form of {\em concept images} \cite{vinner1980concept}. There has also been some work on how we use these knowledge structures by understanding how students construct formal mathematical proofs \cite{moore1994making}. There have been studies to test various hypotheses, but the research is inconclusive in general conditions.

\begin{figure}[tbp]
    \centering
    \includegraphics[width=\columnwidth]{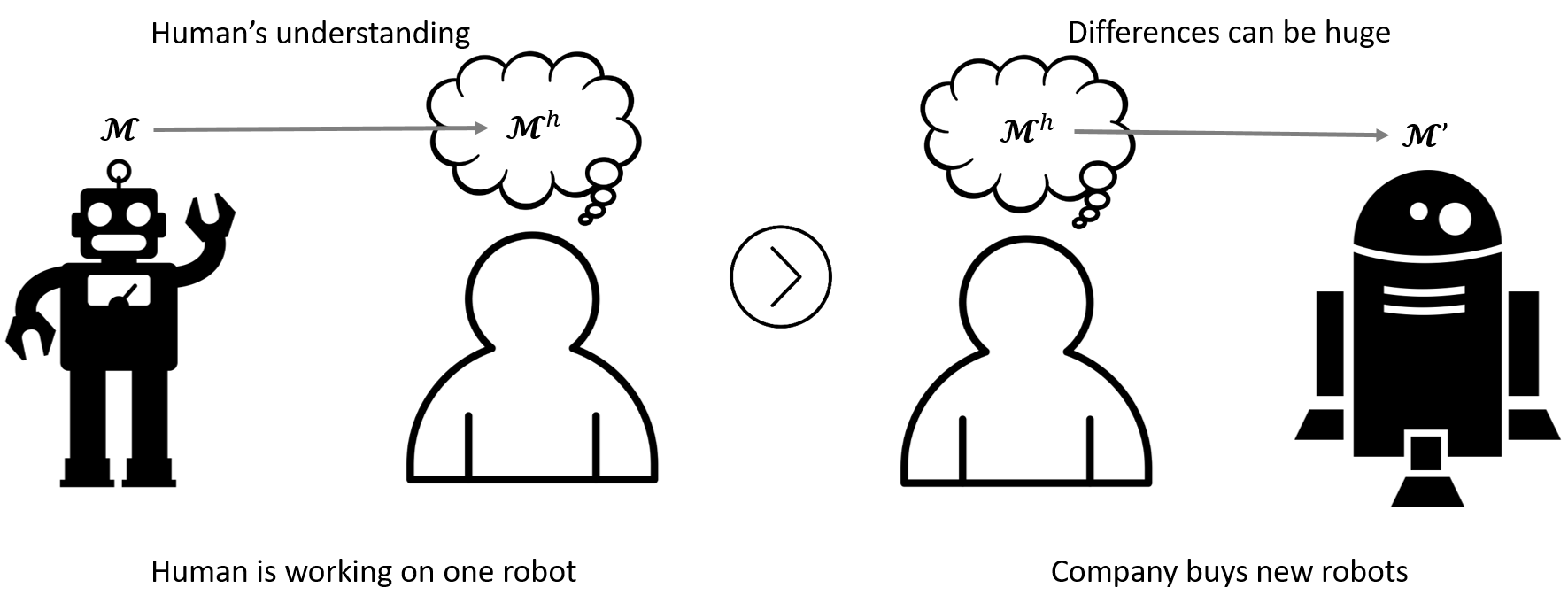}
    \caption{A human teammate is working on one of the robots and has an understanding $\hum$ of robot's model ($\mathcal{M}$). When the company buys a new robot (model $\mathcal{M}'$) but the human teammate's model $\hum$ does not change and can cause damage to the robot or effect the collaboration of the team.}
    \label{fig:scenario}
\end{figure}

In general, a question consists of two parts -- (1) the content of the query, and (2) the set of possible answers. One needs to ask a question where the answer can provide relevant information. However, an automated agent has structured knowledge with the closed-world assumption to calculate and evaluate their decisions and lacks any understanding of the human's mental model. 
Thus, one of the major challenges faced by an agent is to find a way to question the behavior of the teammate and understand various responses by the teammate to update their structured model.


{\em Human-in-the-loop} planning \cite{kambhampati2015human}, suggests to incorporate the teammate's model in the robot's planning process. This work supports the idea of hilp, where the agent can refine the robot's understanding of the teammate's model. These questions should be easy to understand and directed to refine the model. Thus, the major contributions for this work are --
\begin{itemize}
    \item[(C1)] generate questions while accounting for the gap between the structured model and the unstructured (or unknown structured) model of the teammate,
    \item[(C2)] incorporate the effect of uncertainty for generating questions,
    \item[(C3)] derive relevant information from simpler answers like -- (1) yes/no answers, or (2) the complete plan if possible.
\end{itemize}

\section*{Background}
In the intelligent tutoring system community, there has been a lot of work to represent the knowledge of a student, usually based on the number of problems they can solve. 
Sometimes, it can be represented using directed graphs, to depict a dependency among different knowledge concepts called {\em knowledge spaces} \cite{falmagne2006assessment,doignon2015knowledge}. 
We also use such dependency graphs to represent a robot's knowledge about their teammate's model using STRIPS \cite{fikes1971strips} as the causal dependency can be represented using cause and effect for an action.
Robust planning model \cite{nguyen2017robust} is used to represent the uncertainty, as some conditions that may or may not be a part of the human's model of the environment.
These probable conditions for preconditions and effects are called annotations. We will now formally describe the annotated model.


The correct human's model is represented as $\mathcal{M}_H$ is one of the concrete models in the set of models  $\mathbb{M} = \{\mathcal{M}_0, \mathcal{M}_1, \mathcal{M}_2, ..., \mathcal{M}_n\}$.
$\mathbb{M}$ is represented using a super-set of predicates and operators of each models as $\mathbb{M} = \langle\mathcal{F}, \mathcal{A}\rangle$.
$\mathcal{F}$ are the set of propositional state variables or {\em facts} and any state $s \in \mathcal{F}$.
Any action  $a \in \mathcal{A}$ is defined as $a = \langle pre(a), eff^{\pm}(a), \diamond pre(a), \diamond eff^{\pm}(a)\rangle$, where $pre(a), eff^{\pm}(a)$ are certain preconditions and effects and $\diamond pre(a), \diamond eff^{\pm}(a) \subseteq \mathcal{F}$ are the possible preconditions and possible add/delete effects, where $\diamond$ being used to differentiate between the certain and possible predicates.
It implies that a precondition and effect may or may not be present in the original model and are used to model the uncertainty in the robot's understanding of the human's model.
If their are $n$ uncertain preconditions and effects then $|\mathbb{M}| = 2^n$.

A planning problem in the domain is given by $\mathcal{P} = \langle\mathbb{M}, \mathcal{I}, \mathcal{G}\rangle$, where $\mathcal{I}$ is the initial state and $\mathcal{G}$ is the goal. 
$\delta_\mathcal{M}$ is the transition function $\delta_\mathcal{M}: S \times \mathcal{A} \rightarrow S$. It can not be executed in a state $s \not\models pre(a)$; else $\delta_{\mathcal{M}}(s, a) \models s \cup eff^+(a) / eff^-(a)$.
There are two different transition functions for the condition when a specific action in a plan can't be executed (due to $\diamond pre(a)$ being true and not planned for), then the plan either goes to a fail state (pessimistic) or the next action can still be executed (optimistic) in the plan \cite{nguyen2017robust}. An agent assumes an optimistic approach as it is concerned about the interaction and the information about the feasibility of the action.
A plan $\pi$ for the model $\mathcal{M}$ is a sequence of actions $\pi = \langle a_0, a_1, ..., a_n\rangle$. A plan is a valid plan if $\delta_{\mathcal{M}}(\mathcal{I}, \pi) \models \mathcal{G}$. A set of plans $\Pi$ are $\Pi = \{\pi| \forall \pi \delta_{\mathcal{M}}(\mathcal{I}, \pi \models \mathcal{G}\}$


\subsubsection{Motivating example} Figure \ref{fig:scenario} shows the situation where a human teammate working with one robot has to migrate to a new one. We now describe the {\em move} action for the new robot compared to the earlier model. For a robot (such as fetch\footnote{https://fetchrobotics.com/robotics-platforms/fetch-mobile-manipulator/}) it is essential to ensure that the robot's torso is NOT in the stretched state and the arm is tucked close to the body, else it can lead to a fatal accident due to its tendency to topple while turning. The previous model of the robot had no such requirements. The domain for the new robot with uncertainty about the preconditions of a move action, i.e. {\em hand\_tucked} (hand not stretched) and {\em is\_crouch} (torso not stretched). The robot's uncertain model of the human is --
\begin{small}
\begin{verbatim}
(:action tuck
:parameter              ()
:precondition           ()
:possible-precondition  ()
:effect                 (and    (is_crouch)
                                (hand_tucked))
:possible-effect        ()
)
(:action crouch
:parameter              ()
:precondition           ()
:possible-precondition  ()
:effect                 (and    (is_crouch)
:possible-effect        ()
)
(:action move
:parameter              (?from ?to - location)
:precondition           (robot-at ?from)
:possible-precondition  (and    (is_crouch)
                                (hand_tucked))
:effect                 (and    (robot-at ?to)
                                (not (robot-at
                                ?from)))
:possible-effect        ())
\end{verbatim}
\end{small}

There are two unknowns preconditions hence there are four possible models. If the objects are defined in the model are -- {\em roomA} and {\em roomB}. Then a question $Q$, used to elicit human's understanding can be defined as $Q = \langle\mathcal{I}, \mathcal{G}\rangle$, i.e. a planning task,

\begin{lstlisting}
$\mathcal{I}$ = {robot-at(roomA)}, 
$\mathcal{G}$ = {robot-at(roomB)}
\end{lstlisting}
and the possible answers in each model are --
\begin{lstlisting}
$\pi_1$ = $\langle$move(roomA, roomB)$\rangle$, 
$\pi_2$ = $\langle$crouch, move(roomA, roomB)$\rangle$, 
$\pi_3$ = $\langle$tuck, move(roomA, roomB)$\rangle$,
$\pi_4$ = $\langle$crouch, tuck, move(roomA, roomB)$\rangle$
\end{lstlisting}
$\pi_1$ shows that there are no preconditions in the human's model, $\pi_2$ means the precondition of is\_crouch is present, and $\pi_3$ shows that human has hand\_tucked precondition and $\pi_4$ both preconditions are true. Thus, based on the response, the robot can find the specific model of the teammate.

An important point to note is that there might be a correct model for the environment that might be known to the automated agent. However, the agent is only trying to learn the exact human model $\mathcal{M}_H$.
The correct model might be different from the human model and may or may not be part of the set of models $\mathbb{M}$.

Similarly, if we assume that the human has a structured model in the form of STRIPS action or some other causal format, which might be possible in the case of experts. Then the robot can ask if a specific precondition or effect is part of the action in their model. The agent needs to ask these questions due to a lack of knowledge about the human's thought process and understanding of the environment. 

\section*{Problem Formulation}
After describing the robot's understanding of the teammate's model, we are now ready to formally define the questions to elicit the behavior of the teammate.

\begin{definition}
Question is a tuple, $\langle \mathcal{I}, \mathcal{G}\rangle$, i.e. initial and the goal states. The solution to a question are plans $\pi$ such that $\delta(\mathcal{I}, \pi) \models \mathcal G$. A sequence of questions as $Q = \langle\langle \mathcal{I}_0, \mathcal{G}_0\rangle, \langle\mathcal{I}_1, \mathcal{G}_1\rangle, ..., \langle\mathcal{I}_n, \mathcal{G}_n\rangle\rangle$, the tuple of sequential planning tasks where predicates and actions are based on the agent's uncertain model.
\end{definition}
As per the definition, a {\em question} does not take into account the response from the user. In a lot of previous work, responses have been to define the state of failure \cite{verma2020asking}, or provide specific model change and the model as the response \cite{bryce2016maintaining}. However, we believe that giving such responses can be computationally taxing for the teammate. The question was framed using a structured model, but the responses are simple like -- (1) whether the query has a valid plan, and (2) the plan for the query in the teammate's model. Before presenting our analysis for each type of query in the next section, we first define the problem of asking questions from the teammate and then discuss how uncertainty in the model affects these queries and the plan constructed by the teammate.


\subsubsection{Question Framing Problem} Given the tuple $\langle \mathbb{M}, \mathcal{I}_e\rangle$, find the sequence of questions $Q$ to learn the correct model with $|Q| \leq n$, where $n$ is the number of possible predicates.

$\mathbb{M}$ represents the annotated model of the robot and $\mathcal{I}_e$ represents the fully-specified input state of the environment in which we are interacting with the human-in-the-loop. Defining the initial state of the environment is useful, to support the interaction between a teammate and the robot. Initial state $\mathcal{I}_q$ for any query $q$ can be specified as a set difference from the fully-specified initial state of the environment $\mathcal{I}_e$.


\subsubsection{Constrained and Relaxed models.} As we described earlier, for any $n$ possible pre-conditions and effects, hence, there is an exponential set of possible models $(2^n)$. It is computationally infeasible to analyze all of them to generate questions. Thus, we look at two extreme models in the robot's model -- (1) most constrained model $\con$ and (2) most relaxed model $\rel$.
$\con$ is the model where only $\pre(a) = \pre(a) \cup \ppre$, and $\de(a) = \de(a) \cup \pde$. $\rel$ is the model where $\pe = \pe \cup \ppe$.
Let $\Pi_{\con}^q$ represent the set of solutions to any question $q$ in model $\con$ and similarly, $\Pi_{\rel}^q$ represent the If a plan exists in $\rel$, then it exists in the rest of the model and the plan length is minimum in $\min$ and highest in the $\con$ \cite{sreedharan2018handling}. 
The cost of actions is the same across all the concrete models in the robot's model (as one of them represents the model of the same hilp), $C^*_{\con}$ is the cost of the optimal plan in $\con$, and similarly $C^*_{\rel}$ for $\rel$. The corollary follows from the construction of relaxed and constrained models \cite{sreedharan2018handling}.

\begin{coro}
\label{cor:constrained_relaxed_comparison}
$C^*_{\rel} \leq C^*_{\con}$ and $\Pi_{\con}^q \subseteq \Pi_{\rel}^q$.
\end{coro}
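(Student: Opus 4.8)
The plan is to prove the set inclusion $\Pi^q_{\con} \subseteq \Pi^q_{\rel}$ first, and then obtain the cost inequality as an easy consequence of it together with the stated assumption that action costs are identical across all concrete models. The whole argument rests on a single \emph{state-domination} invariant: whenever a plan prefix is executable in $\con$, it is also executable in $\rel$, and the state reached in $\rel$ contains the state reached in $\con$. This direction is the natural one because $\con$ carries the most preconditions and the most deletes while adding the fewest facts, whereas $\rel$ carries the fewest preconditions and deletes while adding the most.

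I would take an arbitrary $\pi = \langle a_0, \dots, a_{m-1}\rangle \in \Pi^q_{\con}$ and prove by induction on the prefix length $k$ that the prefix is applicable from $\mathcal{I}$ in both models and that $s_k^{\con} \subseteq s_k^{\rel}$, where $s_k^{\con}$ and $s_k^{\rel}$ denote the states after $k$ actions in the two models. The base case $k = 0$ is immediate, since both runs start at $\mathcal{I}$. For the inductive step, applicability transfers because $\con$ only adds preconditions: $a_k$ applicable in $\con$ means $s_k^{\con} \models \pre(a_k) \cup \ppre(a_k)$, hence $s_k^{\con} \models \pre(a_k)$, and by the hypothesis $s_k^{\rel} \supseteq s_k^{\con} \models \pre(a_k)$, so $a_k$ is applicable in $\rel$ as well. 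For the state inclusion I would take any fact $f \in s_{k+1}^{\con}$ and split on the two ways it can be present: either $f \in \pe(a_k)$, in which case $f \in \pe(a_k) \cup \ppe(a_k) \subseteq s_{k+1}^{\rel}$; or $f$ survived from $s_k^{\con}$ undeleted, that is $f \in s_k^{\con} \setminus (\de(a_k) \cup \pde(a_k))$, in which case $f \in s_k^{\rel}$ by the hypothesis and $f \notin \de(a_k)$, so $f$ also survives into $s_{k+1}^{\rel}$. Both cases land $f$ in $s_{k+1}^{\rel}$, closing the induction.

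Applying the invariant at $k = m$ gives $s_m^{\con} \subseteq s_m^{\rel}$; since $\pi$ solves the question in $\con$ we have $\mathcal{G} \subseteq s_m^{\con} \subseteq s_m^{\rel}$, so $\pi$ solves it in $\rel$ too, which establishes $\Pi^q_{\con} \subseteq \Pi^q_{\rel}$. The cost inequality then follows with no further work: let $\pi^*$ be an optimal solution in $\con$, so that $C(\pi^*) = C^*_{\con}$. By the inclusion just proved, $\pi^* \in \Pi^q_{\rel}$, and because the cost of each action is the same in every concrete model, $\pi^*$ carries the same cost in $\rel$. As $C^*_{\rel}$ is the minimum cost over $\Pi^q_{\rel}$, which contains $\pi^*$, we conclude $C^*_{\rel} \leq C(\pi^*) = C^*_{\con}$.

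I expect the only delicate point to be the bookkeeping in the inductive step, specifically checking that the asymmetry between the two models (extra deletes and no extra adds in $\con$, versus extra adds and no extra deletes in $\rel$) pushes the inclusion in the right direction rather than the wrong one. I would also flag the standing STRIPS assumption that goals and preconditions are positive conjunctions of facts: the domination argument relies on it, since a larger relaxed state is guaranteed to satisfy a positive goal but would carry no such guarantee for a negated condition.
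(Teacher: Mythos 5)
Your proof is correct, and it takes the only natural route: the paper itself offers no argument beyond asserting that the corollary ``follows from the construction of relaxed and constrained models'' (deferring to Sreedharan et al.), and your state-domination induction is precisely the detail that assertion elides --- applicability transfers because $\con$ only adds preconditions, and the reached state in $\rel$ dominates that in $\con$ because $\rel$ has more adds and fewer deletes, after which the cost bound follows from uniform action costs. Your closing caveat about positive (conjunctive, non-negated) goals and preconditions is well taken, since the domination argument genuinely depends on the STRIPS assumption the paper makes implicitly.
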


Equality in the corollary exists when $\rel$ is the same as $\con$, and there is no uncertainty in the model. 

\subsubsection{Human Model.} We assume that the human model is one of the models in the set of robot's model $\hum \in \mathbb{M}$. Thus when presented with a question $q$, let $\Pi_{\hum}^q$ represent the possible set of solutions in the human's model and $C^*_{\hum}$, is the cost of the optimal plan in the human's model. Given this current setup, we can say a few things about the solution provided by the user to any question.

\begin{coro}
Assuming user to be an optimal planner then for a specific question, $C^*_{\rel} \leq C^*_{\hum} \leq C^*_{\con}$.
\end{coro}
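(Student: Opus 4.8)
The plan is to reduce the cost inequality to a containment of solution sets and then invert it by optimality. For a fixed question $q$, I would first establish the nesting $\Pi_{\con}^q \subseteq \Pi_{\hum}^q \subseteq \Pi_{\rel}^q$, and then observe that since the user is assumed to be an optimal planner, $C^*_{\hum} = \min_{\pi \in \Pi_{\hum}^q} C(\pi)$, with the analogous identities for $\con$ and $\rel$. Because the minimum of a cost function over a larger feasible set can only decrease, the set nesting inverts to $C^*_{\rel} \leq C^*_{\hum} \leq C^*_{\con}$, which is exactly the claim. Thus the entire statement hinges on the containment, which is the natural strengthening of the inclusion $\Pi_{\con}^q \subseteq \Pi_{\rel}^q$ already granted by Corollary~\ref{cor:constrained_relaxed_comparison}.

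To establish the nesting I would use the hypothesis $\hum \in \mathbb{M}$, so $\hum$ is a concrete model obtained by resolving each annotation, i.e. promoting each $\ppre$, $\ppe$, $\pde$ into the certain part or dropping it. Relative to $\hum$, the constrained model $\con$ only adds preconditions (it promotes every $\ppre$ into $\pre$) and delete effects (every $\pde$ into $\de$) while leaving its add effects no larger; symmetrically, $\rel$ only drops preconditions and delete effects and enlarges the add effects. Hence $\con$, $\hum$, $\rel$ are linearly ordered in the ``more constrained'' sense, with $\con$ maximal and $\rel$ minimal. I would then invoke monotonicity of plan validity under this order: strengthening a precondition or adding a delete effect can only cause an action in a fixed plan to become inapplicable or a later precondition to fail, so it can only shrink the solution set, whereas enlarging the add effects can only help. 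Applying this to the two comparisons $\con$ versus $\hum$ and $\hum$ versus $\rel$ yields the two containments and hence the full chain, specializing the cited construction of the extremal models to the concrete model $\hum$.

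The main obstacle is making this monotonicity precise, and in particular fixing the right notion of solution: I would insist that a solution be a genuinely valid plan, one in which every executed action's preconditions truly hold, rather than one read under the optimistic skip-on-failure semantics the paper uses elsewhere for cross-model reasoning. With genuine validity the argument is clean — along a fixed plan I would show by induction on prefixes that the state reached in the more constrained model agrees with the one reached in the less constrained model on every fact that any later precondition or the goal can test, so that strengthening preconditions or adding delete effects can only break validity while enlarging add effects can only preserve it. Since the user plans optimally within $\hum$, the value $C^*_{\hum}$ is realized by exactly such a genuinely valid plan, so the optimistic semantics plays no role in the user's own optimum, and the chain collapses onto the already-cited extremal property that a plan valid in $\rel$ remains valid, with non-decreasing cost, in every less relaxed concrete model.
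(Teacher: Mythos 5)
Your argument is correct and matches the paper's own route: both sandwich $\hum$ between $\con$ and $\rel$ by viewing it as a constrained version of $\rel$ (resp.\ a relaxed version of $\con$) on a subset of the annotations, and then apply the extremal comparison of Corollary~\ref{cor:constrained_relaxed_comparison} twice, reading the cost chain off the containment of solution sets via optimality. Your added care about genuine plan validity versus the optimistic semantics is a refinement the paper glosses over, but it does not change the approach.
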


\begin{coro}
If we relax the assumption of the optimal planner then we can say that, $\Pi_{\con}^q \subseteq \Pi^q_{\hum} \subseteq \Pi_{\rel}^q$.
\end{coro}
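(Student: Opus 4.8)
The plan is to reduce the two-sided containment to a single monotonicity lemma and then apply it twice, exploiting the fact that $\hum$ sits between the two extremes in a natural relaxation order. Call a concrete model $\mathcal{M}'$ a \emph{relaxation} of $\mathcal{M}$ (written $\mathcal{M} \preceq \mathcal{M}'$) when, action by action, $\mathcal{M}'$ is obtained from $\mathcal{M}$ by dropping some preconditions, dropping some delete effects, and/or adding some add effects, i.e. $\pre_{\mathcal{M}'}(a) \subseteq \pre_{\mathcal{M}}(a)$, $\de_{\mathcal{M}'}(a) \subseteq \de_{\mathcal{M}}(a)$, and $\pe_{\mathcal{M}}(a) \subseteq \pe_{\mathcal{M}'}(a)$ for every $a$. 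Since each of the $n$ uncertain conditions is resolved one way or the other in every concrete model, $\con$ (which keeps every possible precondition and delete effect and admits no possible add effect) is the minimum of $\preceq$, and $\rel$ (which drops every possible precondition and delete effect and admits every possible add effect) is its maximum. Because $\hum \in \mathbb{M}$, for each uncertain condition $\hum$ agrees either with $\con$ or with $\rel$, so $\con \preceq \hum \preceq \rel$. Hence it suffices to prove the lemma: if $\mathcal{M} \preceq \mathcal{M}'$ then $\Pi^q_{\mathcal{M}} \subseteq \Pi^q_{\mathcal{M}'}$; applying it to $\con \preceq \hum$ and to $\hum \preceq \rel$ yields the claimed chain.

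For the lemma I would fix a question $q = \langle \mathcal{I}, \mathcal{G}\rangle$ and a plan $\pi = \langle a_0, \dots, a_m\rangle \in \Pi^q_{\mathcal{M}}$, and argue by induction on prefixes that the state reached in $\mathcal{M}'$ dominates the one reached in $\mathcal{M}$: writing $s_k = \delta_{\mathcal{M}}(\mathcal{I}, \langle a_0,\dots,a_{k-1}\rangle)$ and $s'_k$ for the analogous state in $\mathcal{M}'$, the invariant is $s_k \subseteq s'_k$. The base case is $s_0 = s'_0 = \mathcal{I}$. For the step, applying the same action $a$ gives $s_{k+1} = (s_k \setminus \de_{\mathcal{M}}(a)) \cup \pe_{\mathcal{M}}(a)$ and likewise for $\mathcal{M}'$; since $s_k \subseteq s'_k$, $\de_{\mathcal{M}'}(a) \subseteq \de_{\mathcal{M}}(a)$, and $\pe_{\mathcal{M}}(a) \subseteq \pe_{\mathcal{M}'}(a)$, straightforward set algebra gives $s_{k+1} \subseteq s'_{k+1}$. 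Because STRIPS preconditions and the goal are positive conjunctions of facts, domination preserves applicability (if $s_k \models \pre_{\mathcal{M}}(a)$ then $s'_k \supseteq s_k$ satisfies the smaller set $\pre_{\mathcal{M}'}(a)$) and goal satisfaction ($s_m \models \mathcal{G}$ implies $s'_m \models \mathcal{G}$). Thus $\pi$ is valid in $\mathcal{M}'$, i.e. $\pi \in \Pi^q_{\mathcal{M}'}$.

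I expect the main obstacle to be the transition semantics rather than the set algebra. Two points need care. First, the argument relies on preconditions and goals being \emph{positive}; with negative conditions a larger state could violate a constraint that the smaller state satisfied, breaking the link between domination and applicability, so I would state positivity explicitly as a modelling assumption. Second, I must reconcile the dropped preconditions with execution: under the optimistic semantics adopted here, an action with an unmet (uncertain) precondition does not send the plan to a fail state, so the same action sequence is executed in both models and I can run the state recurrence index by index. With those two points pinned down the lemma follows, the corollary follows from the two applications above, and it specialises to Corollary~\ref{cor:constrained_relaxed_comparison} when $\hum$ is replaced by either extreme, with all three sets coinciding exactly when there is no uncertainty and $\con = \hum = \rel$.
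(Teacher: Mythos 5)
Your proposal is correct and follows essentially the same route as the paper: the paper also argues that $\hum$ is the most relaxed version of $\con$ and the most constrained version of $\rel$ on the relevant subset of possible predicates, and then applies the containment of Corollary~\ref{cor:constrained_relaxed_comparison} twice. The only difference is that you actually prove the underlying monotonicity lemma (via the state-domination induction, with the positivity caveat made explicit), whereas the paper delegates that step to the cited construction of the constrained and relaxed models.
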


The corollary explains that the optimal solution constructed by hilp follows these bounds. The corollary follows from corollary \ref{cor:constrained_relaxed_comparison}, where the human model is either the most constrained or relaxed model for a subset of possible predicates. For comparison between $\hum$ and $\rel$, the $\hum$ is the most constrained version of the $\rel$ model for a subset of constraints. Similarly, while comparing $\hum$ and $\con$, we can see that $\hum$ is the most relaxed version of the $\con$ model on the subset of original constraints. 
Showing the bounds for the possible human solution to any question doesn't take the human model into account and applies to any model in $\mathbb{M}$. These models help us analyze different cases to formulate the properties of the queries based on the plans that are possible in these models in the next section.

\section{Distinguishing Query}
Due to the combinatorial explosion, the agent can not search through all possible initial and goal states, and the agent needs to generate a query from the bounds on the human model. Thus, to analyze the specific initial and the goal state, we look at the properties of the plans which should be discussed and then find the initial and goal state that will support their execution. We start our analysis with the simplest case, how to distinguish models with a single probable predicate (i.e., a single unknown predicate in an in the model), and then generate a directed question when there are many possible predicates in the model.

The solution provided by the user depends on the question posed by the agent, and the query depends on the annotated constraint. If $p$ is the predicate possibly  (for now assuming it's a pre-condition) present in action $a$ (we should be writing $\mathcal{M}^{a\diamond p}$, but for simplicity, we write $\mathcal{M}^{\diamond p}$). Thus there can be four different models which the agent needs to consider --

\begin{itemize}
    \item[*] $\con^{-p}$ -- Constrained model with $p$ is not part of it.
    \item[*] $\rel^{+p}$ -- Relaxed model with $p$ is part of the model.
    \item[*] $\hum^{-p}$ -- Human model with $p$ is not part of the model.
    \item[*] $\hum^{+p}$ -- Human model with $p$ is part of the model.
\end{itemize}
Please note, a predicate $p$ being part of the model means that it is present in the specific action $a$, for which it was a possible predicate. It does {\em not} mean an abstracted predicate from the model. Also, note that only one of the models $\hum^{-p}$ and $\hum^{+p}$ is the true model. Given, the solution $\pi_H$ (provided by hilp of cost $C_H$), and assuming human is an optimal planner, there can be a few possibilities --

\begin{enumerate}
    \item $C^*_H < C^*_{\rel^{+p}}$, would mean that $\pi_H \not \in \Pi_{\rel^{+p}}$ as the solution plan cost provided by hilp is less than the optimal plan cost in $\rel^{+p}$. Thus, $\hum^{-p}$ is the real human model, i.e. the constraint is not part of the human model.
    \item $C^*_H \geq C^*_{\con^{-p}}$, would mean that $\pi_H \not \in \Pi_{\con^{-p}}$ as it is not a valid plan in $\con^{-p}$. Thus, $\hum^{+p}$ is the real human model, i.e. the constraint is part of the human model.
\end{enumerate}

In principle, the plan for the question posed by the agent to determine a constraint $p$ will be harder to execute in $\hum^{+p}$ and easier in $\hum^{-p}$ and help us distinguish the models, which can be achieved by tweaking the initial state of the query. For example, if $p$ is not part of the initial state, then the human needs to achieve this pre-condition to execute the action, only if the pre-condition is part of the human model. Similarly, we can extend this idea to any possible effect $e$, where a plan involving the action would be costlier to generate using $\hum^{-e}$, as compared to $\hum^{+e}$ by changing the goal state (in different scenarios, i.e., with or without the effect $e$). We call such queries {\em distinguishing query}.

\begin{definition}
A question $Q_p$ is a {\em distinguishing query} if the plan for it can distinguish between models $\hum^{+p}$ and $\hum^{-p}$.
\end{definition}

Until now, we analyzed different possible models and the set of responses based on the optimal plan in each model. In the next subsection, we look at the properties of the plan that will be provided by the user $\pi_H$ for yes/no response.

\subsection{Properties}
We have outlined the central idea where the action with an unknown predicate has to be part of the plan for a distinguishing query. Now, we will formally describe these properties and use them to generate a query.
For this, we introduce a topic from the planning literature called {\em landmarks} to explain the properties of the plan.

\subsubsection{Landmark.} A {\em landmark} $L$ is a logical formula, where, $\forall \pi: \delta(\mathcal{I}, \pi) \models \mathcal{G}$ and for some prefix of plan $\pi_{pre} = \langle a_0, a_1, ..., a_i\rangle, i < n$, $\delta(\mathcal{I}, \pi_{pre}) \models L$.
An {\em action} landmark is for any action $a$,$\forall \pi:\delta(\mathcal{I}, \pi) \models \mathcal{G}, a \in \pi$ \cite{keyder2010sound}. For example, the $\mathcal{G}$ is a trivial landmark and if there exists only one action $a$ that reaches a particular fact $f \in \mathcal{G}$, then the action is an action landmark. Finding a landmark is a PSPACE-complete problem \cite{hoffmann2004ordered}.

\begin{prop}
\label{prop:simple_distinguishing}
Necessary condition for a question $Q_p$ to distinguish between models $\mathcal{M}^{+p}$ and $\mathcal{M}^{-p}$ is $a_p \in L(\mathcal{M}^{-p})$, where $a_p$ is the action with proposition $p$ is a landmark in model $\mathcal{M}^{-p}$ for the query.
\end{prop}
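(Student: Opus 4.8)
The plan is to prove the necessary condition through its contrapositive: I would show that if $a_p$ is \emph{not} an action landmark in $\mathcal{M}^{-p}$, then $Q_p$ cannot distinguish $\mathcal{M}^{+p}$ from $\mathcal{M}^{-p}$. The entire argument rests on a single structural observation, which I would isolate first: the two models differ only in whether the proposition $p$ decorates the one action $a_p$, so any plan that never executes $a_p$ is scored identically by the two transition functions $\delta_{\mathcal{M}^{+p}}$ and $\delta_{\mathcal{M}^{-p}}$, with the same validity and the same cost.

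With that observation in hand I would unpack the hypothesis. By the definition of an action landmark, $a_p \notin L(\mathcal{M}^{-p})$ means there is a valid plan $\pi$ solving the query in $\mathcal{M}^{-p}$ with $a_p \notin \pi$. I would then lift $\pi$ across the models: since $\pi$ omits $a_p$, the only place at which $\mathcal{M}^{+p}$ and $\mathcal{M}^{-p}$ disagree is never visited along $\pi$, so $\pi$ is equally valid in $\mathcal{M}^{+p}$ at the same cost. Hence $\pi \in \Pi^q_{\mathcal{M}^{+p}} \cap \Pi^q_{\mathcal{M}^{-p}}$ is a common solution to the query in both models.

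To close the contrapositive I would argue that a common solution defeats every distinguishing test. Under the non-optimal-planner corollary (the $\Pi_{\con}^q \subseteq \Pi^q_{\hum} \subseteq \Pi_{\rel}^q$ bound), a human in either true model may legitimately return any plan in its solution set, so both an $\mathcal{M}^{+p}$-human and an $\mathcal{M}^{-p}$-human can answer with $\pi$, and the observed plan carries no information separating the two. Under the optimal reading, taking $\pi$ to be a cost-minimal $a_p$-free plan gives $C^*_{\mathcal{M}^{+p}} \leq C^*_{\mathcal{M}^{-p}}$, which together with Corollary~\ref{cor:constrained_relaxed_comparison} pins the two optimal costs to be equal and collapses the cost thresholds used in the earlier case analysis; either way $Q_p$ fails to distinguish. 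I would also stress why the landmark must live in $\mathcal{M}^{-p}$ rather than $\mathcal{M}^{+p}$: since $\mathcal{M}^{-p}$ is the relaxed model, Corollary~\ref{cor:constrained_relaxed_comparison} gives $\Pi^q_{\mathcal{M}^{+p}} \subseteq \Pi^q_{\mathcal{M}^{-p}}$, so ruling out $a_p$-free plans in the \emph{larger} set $\Pi^q_{\mathcal{M}^{-p}}$ is the binding requirement and propagates automatically to $\mathcal{M}^{+p}$, whereas a landmark only in $\mathcal{M}^{+p}$ would still leave an $a_p$-free plan available to the $\mathcal{M}^{-p}$-human.

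The step I expect to require the most care is the optimal-planner branch: when $a_p$ fails to be a landmark, the optimal plan in $\mathcal{M}^{-p}$ might nonetheless use $a_p$ while every $a_p$-free plan is strictly suboptimal, so I cannot immediately equate the optimal costs. The clean way around this is to anchor distinguishability on the plan-set formulation rather than the optimal-cost one, since the mere existence of the common plan $\pi$ already witnesses a valid human response that fails to separate the models; reconciling that plan-set view with the cost-threshold criteria stated just before the proposition is the only genuinely delicate point, and I would handle it by noting that a \emph{distinguishing} query must succeed against every admissible response, not merely the cost-minimizing one.
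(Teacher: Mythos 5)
Your proof is correct and follows essentially the same route as the paper's: the contrapositive you argue (no $a_p$-free solution may exist in $\mathcal{M}^{-p}$, because such a plan executes identically in both models and so cannot separate them) is exactly the paper's contradiction argument, and your justification for anchoring the landmark in $\mathcal{M}^{-p}$ via $\Pi^q_{\mathcal{M}^{+p}} \subseteq \Pi^q_{\mathcal{M}^{-p}}$ matches the paper's second part. Your closing discussion of the optimal-planner branch is more careful than anything in the paper's proof, but it does not change the underlying approach.
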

\begin{proof}
Proof is divided into two parts. First, we show that being a landmark is necessary and then we show that landmark has to be of model $\mathcal{M}^{-p}$. 
For the first part, let's assume if there is a plan $\pi$ for a distinguishing problem $Q_p$ and such that $a_p \not \in \pi$. Then $\delta_{\mathcal{M}^{+p}}(\mathcal{I}_{Q_p}, \pi) \models \mathcal{G}_{Q_p}$ and $\delta_{\mathcal{M}^{-p}}(\mathcal{I}_{Q_p}, \pi) \models \mathcal{G}_{Q_p}$, i.e. the plan can be executed in both the models. Which refutes the assumption that $Q_p$ is a distinguishing problem.

For the second part, observe that $\Pi^{\mathcal{M}^{+p}}_{Q_p} \subseteq \Pi^{\mathcal{M}^{-p}}_{Q_p}$, i.e. every plan possible in more constrained model ($\mathcal{M}^{+p}$) is also a plan in the less constrained model ($\mathcal{M}^{-p}$). 
Again, if we assume that the action $a_p$ is a landmark in $\mathcal{M}^{-p}$, then there are some solutions $\pi \in \Pi^{\mathcal{M}^{-p}}_{Q_p} \setminus \Pi^{\mathcal{M}^{+p}}_{Q_p}$, where $a_p$ is not a landmark. Following the previous corollary, we can conclude that all plans have the property $a_p \in L(\mathcal{M}^{-p})$.
\end{proof}

An important point to note here is that we do not distinguish between fact $p$ being a pre-condition or effect for action because this condition is necessary for either of them. The agent has to ensure that the action with a possible predicate is a landmark action. One of the methods is to use the {\em add effects} as the goal of the question. In theory, we can use this method to ensure that the action $a$ is a landmark action in the human's model, but in practice, we need to assume that the human is an optimal planner, and thus the action is an optimal landmark. Removing this assumption is out of scope for this work and will be part of our future work.
Proposition \ref{prop:simple_distinguishing} can be extended, that by ensuring that action $a$ is a landmark in $\mathcal{M}^{-p}$, it is harder to achieve all pre-condition for the action $a$ in the constrained model $\mathcal{M}^{+p}$. The proposition directly leads to two corollaries that can establish the distinguishing property.
 
\begin{coro}
The distinguishing problem $Q_p$ should have atleast one solution in $\mathcal{M}^{-p}$.
\end{coro}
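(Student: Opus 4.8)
The plan is to argue by contradiction, leaning on the subset relation between the plan sets of the constrained and relaxed models together with the necessary condition from Proposition \ref{prop:simple_distinguishing}. Recall that $\mathcal{M}^{+p}$ is the more constrained of the two models and $\mathcal{M}^{-p}$ the more relaxed, so by exactly the reasoning used in the proof of Proposition \ref{prop:simple_distinguishing} (itself an instance of Corollary \ref{cor:constrained_relaxed_comparison}) we have $\Pi^{\mathcal{M}^{+p}}_{Q_p} \subseteq \Pi^{\mathcal{M}^{-p}}_{Q_p}$.

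First I would suppose, toward a contradiction, that $Q_p$ has no solution in the relaxed model, i.e.\ $\Pi^{\mathcal{M}^{-p}}_{Q_p} = \emptyset$. Applying the subset relation immediately forces $\Pi^{\mathcal{M}^{+p}}_{Q_p} = \emptyset$ as well, so neither candidate model admits any plan for $Q_p$. I would then invoke the distinguishing mechanism itself: the teammate's response is either a declaration that no valid plan exists or an actual plan $\pi_H$, and in both cases the separation of $\mathcal{M}^{+p}$ from $\mathcal{M}^{-p}$ rests on observing a plan (or its cost) that is feasible in one model but not the other. If both plan sets are empty, every admissible response is identical across the two models, so no observation can tell them apart. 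This contradicts the assumption that $Q_p$ is a distinguishing query, and hence $\Pi^{\mathcal{M}^{-p}}_{Q_p} \neq \emptyset$.

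As a cross-check I would note that the same conclusion follows directly from Proposition \ref{prop:simple_distinguishing}: that result requires $a_p$ to be an action landmark in $\mathcal{M}^{-p}$, and the landmark definition quantifies over the \emph{solution} plans of the problem. For the requirement to be non-vacuous — for $a_p$ to actually be forced into the responses that discriminate the models — the problem must admit at least one solution in $\mathcal{M}^{-p}$.

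The only real subtlety, and the step I would be most careful with, is ruling out the vacuous reading of ``landmark'' when no plans exist at all: under a purely logical quantifier an action is trivially a landmark of an unsolvable problem, so one cannot close the argument on the landmark definition alone. Tying the requirement back to the operational meaning of ``distinguish'' — that some plan must actually be observable for the response to carry information — is what bridges this gap, and I would state that link explicitly rather than rely on the landmark clause in isolation.
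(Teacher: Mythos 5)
Your argument is correct and matches the paper's intent: the paper justifies this corollary (together with its companion) in one sentence by appeal to the construction of the constrained and relaxed models, and your contradiction argument via $\Pi^{\mathcal{M}^{+p}}_{Q_p} \subseteq \Pi^{\mathcal{M}^{-p}}_{Q_p}$ (emptiness of the relaxed plan set forces emptiness of both, making the models indistinguishable by any response) is exactly the fleshed-out version of that same reasoning. Your closing caveat about the vacuous reading of the landmark condition is a fair point the paper does not address, but it does not change the substance of the proof.
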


\begin{coro}
The distinguishing problem $\mathcal{Q}_p$ should not be solvable in $\mathcal{M}^{+p}$.
\end{coro}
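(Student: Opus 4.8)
The plan is to prove the statement by contradiction, reusing the plan-set containment that was already established inside the proof of Proposition~\ref{prop:simple_distinguishing}. Throughout I would work in the yes/no response regime of this subsection, where the teammate only reports whether a valid plan exists for the query. In that regime a query is \emph{distinguishing} exactly when this solvability bit comes out differently under $\mathcal{M}^{+p}$ and $\mathcal{M}^{-p}$, so the target of the proof is to show that solvability of $\mathcal{M}^{+p}$ would collapse this difference.

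The key ingredient I would invoke is the inclusion $\Pi^{\mathcal{M}^{+p}}_{Q_p} \subseteq \Pi^{\mathcal{M}^{-p}}_{Q_p}$, which holds because $\mathcal{M}^{+p}$ is the more constrained of the pair (it carries the extra precondition $p$ in the running case), so any action sequence that stays executable under the stronger requirements of $\mathcal{M}^{+p}$ is \emph{a fortiori} executable under $\mathcal{M}^{-p}$; this is the single-predicate instance of Corollary~\ref{cor:constrained_relaxed_comparison}. I would then assume for contradiction that $Q_p$ is solvable in $\mathcal{M}^{+p}$, i.e. $\Pi^{\mathcal{M}^{+p}}_{Q_p} \neq \emptyset$, pick any witness plan $\pi$ in it, and conclude by the inclusion that $\pi \in \Pi^{\mathcal{M}^{-p}}_{Q_p}$, so $Q_p$ is solvable in $\mathcal{M}^{-p}$ as well. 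Hence the solvability bit is ``yes'' under both models, the yes/no answer cannot separate them, and $Q_p$ fails to be distinguishing, contradicting the hypothesis. Therefore $\Pi^{\mathcal{M}^{+p}}_{Q_p} = \emptyset$, which together with the preceding corollary (solvability in $\mathcal{M}^{-p}$) fixes the only configuration in which the yes/no answer carries information.

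The main obstacle I expect is not the contradiction itself but keeping the orientation of the inclusion airtight, since the labels $\mathcal{M}^{+p}$ and $\mathcal{M}^{-p}$ encode the more constrained versus less constrained model only when $p$ is a precondition or a delete effect; for a possible add effect the constrained direction flips. I would therefore state at the outset that $\mathcal{M}^{+p}$ is read as the more constrained member of the pair (matching the containment asserted in the proof of Proposition~\ref{prop:simple_distinguishing}), and remark that the effect case is recovered symmetrically by interchanging the roles of the two models in the initial/goal construction. With that normalization the same one-line containment argument covers both situations, consistent with the observation after Proposition~\ref{prop:simple_distinguishing} that the distinguishing condition is insensitive to whether $p$ is a precondition or an effect.
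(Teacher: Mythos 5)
Your proposal is correct and follows essentially the same route as the paper, which justifies both corollaries in one line by appeal to the containment $\Pi^{\mathcal{M}^{+p}}_{Q_p} \subseteq \Pi^{\mathcal{M}^{-p}}_{Q_p}$ arising from the constrained/relaxed construction; you simply make the contradiction explicit and correctly restrict to the yes/no response regime, where solvability in $\mathcal{M}^{+p}$ would force both models to answer ``yes.'' Your added caveat about the orientation of the inclusion flipping for possible add effects matches the paper's own remark in the paragraph on the difference between pre-conditions and effects.
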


Both the above corollaries follow from the way constrained and relaxed models are constructed, where distinguishing problem $Q_p$ should have atleast one solution in $\mathcal{M}^{-p}$, and no solutions in $\mathcal{M}^{+p}$. Thus, if the problem is solvable 
(either by executing the plan in the environment or asking the human for the plan in their model) 
then the fact is fictitious otherwise, the fact is real. The answer to the distinguishing query is a simple yes/no, facilitating the interaction with the human-in-the-loop.


\subsubsection*{Difference between Pre-conditions and Effects.} The analysis stands true for any possible predicate, be it pre-condition or an effect, and results in different ways of constructing models $\mathcal{M}^{-p}$ and $\mathcal{M}^{+p}$. The basis for constructing these models is that $\mathcal{M}^{+p}$ is the constrained model, and $\mathcal{M}^{-p}$ is the relaxed one. Thus for any possible pre-condition and a delete effect -- $\mathcal{M}^{+p}$ is where the predicate is true, and $\mathcal{M}^{-p}$ is where the fact is not part of the action in the model. Conversely, for add effects $\mathcal{M}^{+p}$ -- the predicate is not part of the action in the model, and $\mathcal{M}^{-p}$ -- predicate is part of the action in the model.


\subsection{Proposition Isolation Principle (PIP)}
\subsubsection{Plan Generation Queries} In this section, we discuss how an agent can ask questions to ensure that uncertainty about a predicate is not affected due to interaction with another possible predicate. It is a brute force method to ask questions by isolating the predicate, i.e., there will be $n$ questions for $n$ possible unknowns. 
The PIP method involves constructing the two models --
\begin{itemize}
    \item $\mathcal{M}^{+p}$ is the most constrained model $\con$.
    \item $\mathcal{M}^{-p}$ is $\con^{-p}$, i.e. most constrained model where the predicate is not present in case of pre-conditions and delete effects, and considered part of the model in case of add-effects.
\end{itemize}
It naturally follows that the models differ by a single predicate. Using $\con$ to pursue a landmark essentially helps isolate the specific predicate $p$ by supporting both the presence and absence of other predicates. For example, an action in the plan may have a possible predicate as a pre-condition, then it is added to the initial state of the query.

\begin{prop}
\label{prop:general_distinguishing}
Any question $Q_p$, which distinguishes the model $\mathcal{M}^{+p}$ and $\mathcal{M}^{-p}$, is isolated by $\con$ and $\con^{-p}$.
\end{prop}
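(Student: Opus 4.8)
\begin{sketch}
The plan is to reduce Proposition~\ref{prop:general_distinguishing} to Proposition~\ref{prop:simple_distinguishing} together with the monotonicity of Corollary~\ref{cor:constrained_relaxed_comparison}, and then to show that distinguishing the specific pair $\con,\con^{-p}$ pins down the status of $p$ in the \emph{true} human model $\hum$ for every setting of the remaining annotations. First I would record the structural fact that, by the PIP construction, $\con$ and $\con^{-p}$ agree on every possible predicate except $p$: each other possible precondition or delete effect is forced present, and each other possible add effect is forced absent, in both models, so the pair differs in exactly one predicate. This lets me identify $\con$ with $\mathcal{M}^{+p}$ and $\con^{-p}$ with $\mathcal{M}^{-p}$ and invoke Proposition~\ref{prop:simple_distinguishing}: a distinguishing $Q_p$ forces $a_p$ to be a landmark in $\con^{-p}$, with $Q_p$ solvable in $\con^{-p}$ and unsolvable in $\con$.

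The core of the argument is to lift these two facts from the pair $\con,\con^{-p}$ to an arbitrary $\hum$, so that the answer to $Q_p$ is governed by $p$ alone. For the solvable side, I would take any witness $\pi \in \Pi^{Q_p}_{\con^{-p}}$ and use that $\con^{-p}$ is maximally constrained on every predicate other than $p$; hence any human model without $p$, namely $\hum^{-p}$, is a relaxation of $\con^{-p}$, and Corollary~\ref{cor:constrained_relaxed_comparison} gives $\pi \in \Pi^{Q_p}_{\con^{-p}} \subseteq \Pi^{Q_p}_{\hum^{-p}}$. Thus $Q_p$ stays solvable in $\hum^{-p}$ no matter which other predicates the human carries. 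For the unsolvable side I cannot appeal to monotonicity, since $\hum^{+p}$ is a relaxation of $\con$ on the remaining predicates and could \emph{gain} solutions; instead I would lean on the landmark. Because $a_p$ is an action landmark forced by the goal of $Q_p$, every candidate plan in $\hum^{+p}$ must contain $a_p$; but with $p$ present yet unachievable in the query (absent from $\mathcal{I}_{Q_p}$ and produced by no action), $a_p$ can never be executed, so no plan reaches $\mathcal{G}_{Q_p}$ in $\hum^{+p}$, independently of the other annotations.

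Combining the two sides yields the isolation claim: $Q_p$ is solvable in $\hum$ exactly when $p \notin \hum$, with the other possible predicates playing no role, which is precisely what it means for $\con$ and $\con^{-p}$ to isolate $p$. I would close by noting that the delete-effect and add-effect cases follow from the same template after swapping the roles of $\mathcal{M}^{+p}$ and $\mathcal{M}^{-p}$ as dictated by the ``Difference between Pre-conditions and Effects'' discussion.

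The step I expect to be the main obstacle is the unsolvable side, because the natural tool points the wrong way: relaxing the non-$p$ predicates can only add plans, so ruling out a solution in $\hum^{+p}$ cannot come from Corollary~\ref{cor:constrained_relaxed_comparison}. The crux is to argue rigorously that the landmark status of $a_p$ is a property of the goal structure of $Q_p$ rather than of any model's preconditions, so that it survives relaxation of the other annotations, and that the unachievability of $p$ is likewise a property of the query; only then does the impossibility of executing the landmark action transfer to every $\hum^{+p}$ regardless of the remaining predicates.
\end{sketch}
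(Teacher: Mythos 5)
Your proposal is correct and rests on the same core observation as the paper's sketch, namely that $\con$ and $\con^{-p}$ agree on every possible predicate except $p$, so that a plan built for $\con^{-p}$ already satisfies every other candidate constraint and remains executable in any model not containing $p$. Where you genuinely diverge is in the decomposition: the paper asserts in one breath that ``the plans are possible in all the other models, and the failure is due to the constraint $p$,'' whereas you split the claim into a solvable direction (handled by the monotonicity of Corollary~\ref{cor:constrained_relaxed_comparison}, since every $\hum^{-p}$ is a relaxation of $\con^{-p}$) and an unsolvable direction, which you correctly observe cannot follow from monotonicity because $\hum^{+p}$ may \emph{gain} plans relative to $\con$. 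Your landmark-plus-unachievability argument for that direction --- $a_p$ must occur in any plan by Proposition~\ref{prop:simple_distinguishing}, yet $p$ is absent from $\mathcal{I}_{Q_p}$ and has no achiever in the query --- is a substantive addition that the paper leaves implicit, and your closing caveat is well placed: the argument only goes through if the landmark status of $a_p$ is forced by the goal structure (e.g.\ $a_p$ is the sole achiever of a goal fact) rather than by preconditions that could be relaxed away in $\hum^{+p}$, and if the query construction really does exclude every achiever of $p$ (as the paper stipulates in its running example by assuming \emph{tuck} is absent). Neither your sketch nor the paper's fully discharges these two conditions, but you at least name them, so on that point your version is the more careful one.
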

\begin{sketch}
To prove the statement, we need to understand that the models differ due to the predicate $p$, thus all the plans will only differ due to the absence (presence, in case of add effect) of the predicate in $\con$.
Since, the plans are being constructed for $\con^{-p}$ the agent ensures that other possible or valid constraints (predicates in the action) are either satisfied through the initial state or are easily achievable by actions in the plan.
In other words, the plans are possible in all the other models, and the failure is due to the constraint (predicate) $p$. This completes the sketch.
\end{sketch}

The proposition \ref{prop:general_distinguishing} explains that by using the PIP method, the agent can ask questions about every predicate. It works on the idea of providing all the pre-conditions that affect the action $a_p$. If the user might fail to execute a plan involving $a_p$, then it is only due to the constraint $p$. 
To use a part of the model as the running example, if we wanted to isolate the predicate of hand\_tucked and for the scenario please assume tuck action is not present in any of the models, then the ``plan'' generation query for the teammate will be --
\begin{lstlisting}
$\mathcal{I}$ = {is_crouch, robot-at(roomA)}, 
$\mathcal{G}$ = {robot-at(roomB)}
\end{lstlisting}
The teammate will either respond that a plan is possible in their model, or it is not possible to reach the goal. The robot can understand each possible response, where ``yes'' means hand\_tucked precondition is not part of the human model and ``no'' means it is a precondition in the humans model. An important point to note is that another possible predicate is\_crouch was made possible in the initial state, thus, ensuring that if the user responds ``no'', it could only be because of the presence of hand\_tucked precondition, thus explaining the PI principle. This query depends on the idea that {\em tuck} action is not part of the teammate's model else this query can not distinguish based on a yes/no response from them. Now we discuss when any question is a distinguishing query.

\subsubsection{Sufficiency Condition.} 
The agent uses a $\con$ model to find the solution with the landmark. However, it doesn't prove that the same or similar plan involving the landmark action would be optimal in the human model.
Due to fewer constraints compared to $\con$, there could be many other actions that could be used by the agent and thus might have a different plan which might not involve the action.
Thus, the agent needs to find this scenario and prevent it by increasing the cost of executing other actions.

To understand these scenarios, the agent can convert action to a landmark by adding it's $eff^+(a_p)$ as the goal. But it is always possible to have two or more actions that provide a subset of the goal predicates (whose union is the goal set). In the teammate's model, the plan with these actions could be smaller. Thus, the agent needs to ensure that these actions are harder to execute in any less constrained model. Another scenario could be when an action that provides the precondition $p$ for action $a_p$ also satisfies another pre-condition in it. Thus, when the teammate's plan includes the action which provides the possible pre-condition, the agent can't be sure the action is for pre-condition $p$ or not. It could be impossible to check all the action combinations for every possible condition, but once the plan for the query is known in $con$, the number of actions in the plan are finite, and thus evaluating these conditions is feasible.

\subsubsection{Solving (Plan) v$\backslash$s Validating (Val query).} 
We have outlined a method that will be discussed in the solution section, but there are still conditions that might ensure the teammate's plan might not include the $a_p$.
An agent can continue to constrain the set of possible plans by removing actions from the model or some other condition. 
However, these extra additions can overwhelm the teammate as the communication keeps getting complex and can burden the teammate in finding a plan. 
The agent can ask the query as the initial, the goal state, and the plan. The teammate can respond to the validity of the plan in the model. In this scenario, the agent needs to ensure that the plan can be executed only in the $\con^{-p}$.
Where solving a problem would have given more information, but currently, our framework handles the human as an optimal agent. Thus, the agent can still ask the plan constructed with the initial and goal state in $\con^{-p}$ for validation. 

Let's continue with the running example and since {\em tuck} action is part of the teammate's model, we can't ask the teammate whether a plan is possible in their model and thus the plan query is not a distinguishing query has a possible plan of {\em tuck} and {\em move} or just {\em move} in the candidate models. Thus, we need to ask it like a validate query where we validate for the constrained model.
\begin{lstlisting}
$\mathcal{I}$ = {is_crouch, robot-at(roomA)}, 
$\mathcal{G}$ = {robot-at(roomB)}
$\pi$ = $\langle$move(roomA, roomB)$\rangle$.
\end{lstlisting}
Now if the user responds ``yes'' hand\_tucked is not part of the teammate.s model, and if they respond ``no'' then hand\_tucked is part of their model.



\section*{Decreasing Questions}
In the previous section, we have shown how the agent can interact with the teammate, and ensure that every interaction can be useful. However, if the agent wants to decrease the number of questions, it has to question more than one predicate, and thus, there could be multiple reasons for the infeasibility of the query.
There are conditions when both $\mathcal{M}^{+p}$ and $\mathcal{M}^{-p}$ have another feasible solution. Thus, using PIP and these conditions, every subset of the models is bound to have an optimal plan and the agent uses the differing plans to infer the teammate's model.
The analysis assumes the teammate is an optimal planner and the positive action cost for the model. We present the step-by-step construction of such questions that we call {\em templates}, as they can be merged to construct a query for more than one predicate, where every subset of the model has a valid and optimal solution.


\subsubsection{Pre-conditions.} For a proposition $p$ which is a possible pre-condition of the action $a_p$. For the action to executed $pre(a_p)$ can be provided by -- (1) initial state, or (2) executing another action $a'$ where $p \in eff(a')$.
When the precondition comes from the initial state, then the distinguishing query can construct using an isolated proposition. Now, we discuss how such a template exists and how a query can ensure different plans in both the models ($\mathcal{M}^{+p}$ and $\mathcal{M}^{-p}$).

\begin{prop}
\label{prop:template_pre}
Distinguishing question for $\diamond p$ of an action $a_p$ has a distinct valid plan in models $\mathcal{M}^{+p}$ and $\mathcal{M}^{-p}$ when for another action $a'$, $p \in eff^+(a')$.
\end{prop}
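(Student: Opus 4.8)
The plan is to construct the query $Q_p$ so that the action $a_p$ is forced into every solution, while the auxiliary action $a'$ (the producer of $p$) is forced into the $\mathcal{M}^{+p}$ solution but absent from the optimal $\mathcal{M}^{-p}$ solution; the two plans then differ by at least $a'$. Following Proposition \ref{prop:simple_distinguishing}, I would first make $a_p$ a landmark in both models by placing $\pe(a_p)$ (or a fact that only $a_p$ can supply) into the goal $\mathcal{G}$, and I would keep $p$ out of the initial state, i.e. $p \notin \mathcal{I}$. Because $p \in \pe(a')$, the action $a'$ is available to supply $p$, and this is exactly what makes $\mathcal{M}^{+p}$ solvable in this template (in contrast to the plain distinguishing query, where $\mathcal{M}^{+p}$ was designed to be unsolvable).

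Next I would exhibit the two plans explicitly. In $\mathcal{M}^{-p}$ the precondition $p$ is absent from $a_p$, so $a_p$ can fire once its remaining certain preconditions hold; isolating $p$ through the $\con^{-p}$ construction of Proposition \ref{prop:general_distinguishing} guarantees those remaining preconditions are met from $\mathcal{I}$ or by cheap actions, yielding a valid plan $\pi^{-p}$ that contains $a_p$ but not $a'$. In $\mathcal{M}^{+p}$ the action $a_p$ additionally requires $p$; since $p \notin \mathcal{I}$ and $a'$ is its producer, any valid plan must execute $a'$ before $a_p$, so the valid plan $\pi^{+p}$ contains both $a'$ and $a_p$. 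Hence $\pi^{+p} \neq \pi^{-p}$, which establishes distinct valid plans in the two models. Moreover, by Corollary \ref{cor:constrained_relaxed_comparison} we have $\pi^{+p} \in \Pi^{\mathcal{M}^{+p}}_{Q_p} \subseteq \Pi^{\mathcal{M}^{-p}}_{Q_p}$, so $\pi^{+p}$ is also valid (merely sub-optimal) in $\mathcal{M}^{-p}$, confirming that both models admit solutions while their optimal plans still differ.

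The main obstacle is distinctness under the optimal-planner assumption: I must rule out the degenerate case in which the optimal $\mathcal{M}^{-p}$ plan itself uses $a'$, which would collapse the two plans. This forces the requirement that $a'$ serve no purpose in $\mathcal{M}^{-p}$ beyond producing $p$, that is, $p$ is not otherwise needed to reach $\mathcal{G}$ and $a'$ contributes no other useful effect. Given positive action costs, an optimal planner then strictly prefers the $a'$-free plan in $\mathcal{M}^{-p}$, whereas $a'$ becomes a (conditional) action landmark in $\mathcal{M}^{+p}$ as the sole supplier of the now-required fact $p$. Making this isolation precise is exactly where the guarantee of Proposition \ref{prop:general_distinguishing} does the work, since it ensures the only source of plan divergence is the predicate $p$; that reduction is the step I expect to require the most care.
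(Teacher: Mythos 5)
Your proposal is correct and follows essentially the same construction as the paper's proof: set $\mathcal{G} = eff^+(a_p)\setminus\{p\}$, supply all preconditions of $a_p$ and $a'$ except $p$ in $\mathcal{I}$, and observe that $\mathcal{M}^{+p}$ forces the plan $\langle a', a_p\rangle$ while optimality with positive action costs yields $\langle a_p\rangle$ in $\mathcal{M}^{-p}$. Your explicit treatment of the degenerate case where $a'$ might still appear in the optimal $\mathcal{M}^{-p}$ plan is a point the paper's proof leaves implicit (deferring it to its separate sufficiency-condition discussion), but it does not change the argument.
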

\begin{proof}
Assume for a given distinguishing problem $Q_p$ actions $a \in L(\mathcal{M}^{-p})$ and $a'$ can be executed i.e. $\mathcal{I}_{Q_p} = \{pre(a) \cup pre(a') \} \setminus \{p\}$ and $\mathcal{G}_{Q_p} = \{eff^+(a)\} \setminus \{p\}$ . For, model $\mathcal{M}^{+p}$ the plan is $\pi = \langle a', a\rangle$. This plan is also a valid plan in $\mathcal{M}^{-p}$. But due to optimality and non-zero action costs, the plan $\pi$ is not an optimal plan in $\mathcal{M}^{-p}$ as pre-condition provided by $a'$ is not required in the model to execute action $a$. Thus, the optimal plan will be $\pi' = \langle a \rangle$ which is distinct.
\end{proof}

\subsubsection{Add Effects.} For possible add effects $p = \diamond eff^c(a)$ an action $a'$ such that $p \in pre(a')$. If there is another action $a''$ where $p \in eff^+(a'')$. Now we will discuss the template in some more detail.

\begin{prop}
\label{prop:template_eff}
Given three actions, $a, a', a''$ where $p = \diamond eff^c(a)$ $\diamond eff^c(a) \in pre(a')$ and $\diamond eff^c(a) \in eff^c(a'')$ will have distinct plans in models $\mathcal{M}^{+p}$ and $\mathcal{M}^{-p}$.
\end{prop}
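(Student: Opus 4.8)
The plan is to mirror the construction in the proof of Proposition~\ref{prop:template_pre}, but with the roles of producing and consuming $p$ interchanged, since here $p$ is an add effect rather than a precondition. Recall that for add effects $\mathcal{M}^{-p}$ is the relaxed model in which $a$ \emph{does} produce $p$, while $\mathcal{M}^{+p}$ is the constrained model in which it does not; the aim is to build a single planning task $Q_p$ whose optimal solutions in the two models differ exactly by the presence of the backup producer $a''$.

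First I would fix the query. Take $\mathcal{I}_{Q_p} = \{pre(a) \cup pre(a') \cup pre(a'')\} \setminus \{p\}$, so that $p$ is absent from the initial state while every action's remaining preconditions are satisfied, and take $\mathcal{G}_{Q_p} = eff^+(a') \cup (eff^+(a) \setminus \{p\})$, choosing the contribution of $a$ to the goal to be a fact that only $a$ supplies. This forces $a$ to be an action landmark in $\mathcal{M}^{-p}$, discharging the necessary condition of Proposition~\ref{prop:simple_distinguishing}, and likewise forces $a'$ into every solution. Since $p \in pre(a')$ and $p \notin \mathcal{I}_{Q_p}$, the fact $p$ must be established by some producing action before $a'$ can be applied.

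Next I would exhibit the two optimal plans and argue they are distinct. In $\mathcal{M}^{-p}$ the already-forced action $a$ establishes $p$ as a side effect, so $\pi = \langle a, a' \rangle$ is valid and, by positive action costs, optimal; in particular it omits $a''$. In $\mathcal{M}^{+p}$ the action $a$ no longer yields $p$, so the only remaining producer of $p$ is $a''$, and every valid plan must insert $a''$ before $a'$; the optimal plan is $\pi' = \langle a, a'', a' \rangle$, up to the ordering of $a$ and $a''$. The two plans differ precisely by $a''$, so they are distinct and the teammate's answer pins down the model.

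The main obstacle is justifying the two uniqueness assumptions that make optimality bite: that $a$ is the sole (or cheapest) supplier of its designated goal fact, so that $a$ is genuinely forced and supplies $p$ for free in $\mathcal{M}^{-p}$; and that $a''$ is the sole supplier of $p$ once $a$ is stripped of it, so that $a''$ is genuinely required in $\mathcal{M}^{+p}$. These are exactly the relaxed-model uniqueness issues raised in the Sufficiency Condition discussion; I would either assume the domain admits no competing producers or, following that discussion, inflate the cost of any competitor so the claimed plans stay strictly optimal. With these in hand, the same appeal to optimality and positive action costs used in Proposition~\ref{prop:template_pre} closes the argument.
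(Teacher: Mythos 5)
Your construction is essentially identical to the paper's: the same initial state $\{pre(a) \cup pre(a') \cup pre(a'')\} \setminus \{p\}$, the same goal $\{eff^+(a) \cup eff^+(a')\} \setminus \{p\}$, and the same pair of plans $\langle a, a'\rangle$ in $\mathcal{M}^{-p}$ versus $\langle a, a'', a'\rangle$ in $\mathcal{M}^{+p}$. Your explicit flagging of the uniqueness-of-producer assumptions is a welcome addition that the paper only addresses separately in its Sufficiency Condition discussion, but the argument is the same.
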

\begin{proof}
We will again use proof by construction. Consider a distinguishing proble $Q_p$, where $\mathcal{I}_{Q_p} = \{ pre(a) \cup pre(a') \cup  pre(a'') \} \setminus \{p\}$ and $\mathcal{G}_{Q_p} = \{ eff^+(a) \cup eff^+(a')\} \setminus \{p\}$. This will ensure actions $a$ and $a'$ are landmarks and $a''$ can be executed. Now in the case of $\mathcal{M}^{-p}$ (remember constructive effect are part of less constrained model, follows from proposition \ref{prop:general_distinguishing}), $\pi = \langle a, a' \rangle$, and for model $\mathcal{M}^{+p}$, $\pi = \langle a, a'', a' \rangle$, which completes the construction.
\end{proof}

The limitation of asking about binary interaction is that the agent is not using the model structure. However, using the templates checks whether a particular causal dependency exists in the teammate's model. If it is not part of the true model then an optimal plan will not use the specific action in the plan that provides a relationship. In this case, we need to be sure there isn't another causal dependency among those actions, and in that case, the agent can't be certain. The PIP principle holds for asking questions like this and the agent has to isolate the predicates that it wants to use for asking questions. Extra care has to be taken to ensure that the predicates that are used as a template -- (1) do not have more than one causal dependency, and (2) do not have mutually exclusive relationships in the plans.

If we look at the motivating example and assume that the teammate has the {\em tuck} action in their model. Thus a plan generation query is a template query here --

\begin{small}
\begin{lstlisting}
$\mathcal{I}$ = {is_crouch, robot-at(roomA)}, 
$\mathcal{G}$ = {robot-at(roomB)}
\end{lstlisting}
\end{small}
and in this case the possible responses are --
\begin{small}
\begin{lstlisting}
$\pi_{\con}$ = $\langle$tuck, move(roomA, roomB)$\rangle$
$\pi_{\con^{-p}}$ = $\langle$move(roomA, roomB)$\rangle$
\end{lstlisting}
\end{small}
Two important points to note here, are how PIP was applied for isolating hand\_tucked precondition and the answer to the same question (as plan generation query), requires more information to refine the teammate's model.

It is possible to combine the templates because every sub-model (with or without the predicates) has a valid plan and if there aren't any negative interactions between pair-wise actions in the plan. Since its not feasible to check all the cases, we construct a planning problem from the initial state of the environment which will be presented in the next section.

\begin{coro}
Given no destructive interactions between the actions, the templates can be combined where each sub-space, such as $\mathcal{M}^{+p_1,-p_2}$ will have a distinct plan.
\end{coro}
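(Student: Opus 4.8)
The plan is to proceed by induction on the number of isolated predicates, using Propositions \ref{prop:template_pre} and \ref{prop:template_eff} as the base case and the no-destructive-interaction hypothesis as the inductive glue. First I would fix possible predicates $p_1, \dots, p_k$, each equipped with its single-predicate template query $Q_{p_i} = \langle \mathcal{I}_i, \mathcal{G}_i\rangle$ built so that the landmark action $a_{p_i}$ (together with any helper actions $a'_i, a''_i$ prescribed by the template) is forced, and so that $\mathcal{M}^{+p_i}$ and $\mathcal{M}^{-p_i}$ admit the distinct plans guaranteed by the earlier propositions. The combined query is then $\mathcal{I}_Q = \bigl(\bigcup_i \mathcal{I}_i\bigr)$ and $\mathcal{G}_Q = \bigl(\bigcup_i \mathcal{G}_i\bigr)$, taking the same per-predicate set differences by $p_i$ that the single-predicate constructions already prescribe.

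The core of the argument is a decomposition claim: in any sub-model $\mathcal{M}^{\pm p_1, \dots, \pm p_k}$, an optimal plan for $Q$ is, up to reordering, the disjoint concatenation of the per-predicate optimal sub-plans. I would establish this in two directions. For the validity (upper-bound) direction, the concatenation of the individual optimal sub-plans is itself a valid plan: since each $a_{p_i}$'s add effect is still a conjunct of $\mathcal{G}_Q$ it remains a landmark by Proposition \ref{prop:simple_distinguishing}, and the no-destructive-interaction hypothesis guarantees that no action belonging to template $j$ deletes a precondition or goal atom established for template $i$, so the sub-plans can be executed back-to-back. For the optimality (lower-bound) direction I would argue that no cheaper global plan exists by sharing actions across templates: the PIP isolation of Proposition \ref{prop:general_distinguishing}, together with the requirement that templated predicates carry at most one causal dependency and no mutex coupling, forbids a single action from simultaneously discharging the isolated predicate of two distinct templates, so with non-zero action costs the optima add and the global optimum equals the sum of the per-template optima.

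With the decomposition in hand, distinctness across sub-spaces is immediate. For each index $i$, Propositions \ref{prop:template_pre} and \ref{prop:template_eff} already give that the sub-plan used when $p_i$ is present differs from the one used when $p_i$ is absent (the former inserts the helper action $a'_i$ or $a''_i$, the latter omits it). Hence any two sub-models that disagree on even a single $\pm p_i$ induce globally distinct optimal plans, so the $2^k$ sub-spaces are separated and the map from sub-models to optimal plans is injective. The inductive step is then just adjoining one further template and re-invoking non-interference to preserve both validity and additivity of cost.

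I expect the main obstacle to be the optimality half of the decomposition claim, namely ruling out a clever global plan that reuses one action to serve two templates at once. This is exactly where the hypotheses must do the work: ``no destructive interactions'' has to be read to also exclude \emph{constructive} sharing (one action achieving the isolated precondition or effect for two different landmark actions) and mutex couplings, since otherwise a shorter merged plan could collapse two sub-plans and break the bijection between sub-models and plans. Making that reading precise, and checking it survives the union construction at each inductive step, is the delicate part; the validity direction and the distinctness conclusion are then routine consequences of Propositions \ref{prop:template_pre} and \ref{prop:template_eff}.
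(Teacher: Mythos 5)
Your proposal is correct and follows essentially the same route as the paper, which argues in one line that absent destructive interactions the plan for each sub-space is the union of the distinct per-template plans; you simply make the validity and optimality halves of that decomposition explicit and wrap them in an induction. The obstacle you flag---that ``no destructive interactions'' must also rule out \emph{constructive} sharing of one action across two templates---is real, and the paper handles it not inside this corollary but in the surrounding side conditions on templates (at most one causal dependency per templated predicate and no mutually exclusive relationships in the plans), so your reading of the hypothesis matches the paper's intent.
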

The proof follows from the assumption of lack of destructive interactions as the plan for each sub-space is a union of distinct plans in the template. We can merge the questions and even ask the teammate which plan from the set of plans is valid in their model.
If we check the motivating example, it shows the case of two template queries for questioning about is\_crouch and hand\_tucked predicate in {\em move} action.

\begin{algorithm}[tbp]
\SetAlgoNoLine
\SetArgSty{textnormal}
\SetKwInOut{Input}{Input}
\SetKwInOut{Output}{Output}
\Input{$\mathcal{I}_e, \con^{-p}, {a'}, {a_p}$}
\Output{$\langle \mathcal{I}_q, \mathcal{G}_q, \pi_q\rangle$}
\Begin{
    $\mathcal{G} \leftarrow pre(a') \cup pre(a_p)$\;
    $\pi \leftarrow Solve(\langle\con^{-p}, \mathcal{I}_e, \mathcal{G}\rangle)$\;
    $\pi_q \leftarrow \langle\pi, a_p\rangle$\;
    $\mathcal{I}_q, \mathcal{I}_{temp} \leftarrow$ Project$(\mathcal{I}_e, \pi)$\;
    $\mathcal{G}_q, \mathcal{G}_{temp} \leftarrow eff^{+}(a_p)$ \;
    \For{$a_x \in \{a | a \in \pi_q \& eff^{+}(a) \in \mathcal{G}_q\}$}{
        $f \leftarrow \{f| f \in eff^{+}(a_x) \& f\not \in \mathcal{G}_q\}$\;
        $\mathcal{G}_{temp} = \mathcal{G}_{temp} \cup \neg f$\;
    }
    \For{$a_x \in \{a| a\in \pi_q, p \in eff(a), pre(a_p) \setminus p \in eff(a)\}$}{
        $f \leftarrow \{f|f \in pre(a_p), f \in eff(a_x), f \not= p\}$\;
        $I_{temp} = I_{temp} \cup f$\;
    }
    $\pi \leftarrow Solve(\langle\con^{-p}, \mathcal{I}_{temp}, \mathcal{G}_{temp}\rangle)$\;
    \uIf{$a_p \in \pi$}{
        \Return $\langle \mathcal{I}_{temp}, \mathcal{G}_{temp}, \pi\rangle$\;
    }
    \Else{
        \Return $\langle \mathcal{I}_q, \mathcal{G}_q, \pi_q\rangle$\;
    }
}
\caption{Query Generation Algorithm (QGA)}
\label{alg:query_generation}
\end{algorithm}

\section{Proposed Solution}
As we described in the earlier sections, a query is an initial and goal state in which the action $a_p$ is an optimal landmark in the model $\con^{-p}$. The easiest way to achieve this is to define the goal state as the add-effects of the action $a_p$. Finding an initial state is difficult because we need to ensure that given any possible pre-conditions in effect, the interaction should provide information. Thus, in this section, we describe two algorithms -- (1) to iterate over each unknown predicate and decide whether templates can be combined, and (2) for generating the query.

The parent routine is to iterate over each unknown in an order decided by $\joe$, which assumes that all the unknown predicates are true. This model can't be used for analysis, as it is neither most constrained nor most relaxed, and the plans generated may or may not be part of another model. Then we construct a relaxed planning graph with pair-wise mutexes called graph-plan planning graph \cite{kambhampati1997understanding}. It also supports the construction of templates by checking the conditions explained in propositions \ref{prop:template_pre} and \ref{prop:template_eff} are satisfied and mutexes for plans to merge them.


Algorithm \ref{alg:query_generation}, generated queries for specific $p, a_p$ pairs. First, it solves a planning problem, where the plan is to reach the preconditions of the action $a_p$ in the model $\con^{-p}$. If the query is for the template, then preconditions of other actions is used as well (follows from proposition \ref{prop:template_pre} and \ref{prop:template_eff}). 
The solution of the planning problem and then executing action $a_p$ as the goal is $eff^+(a_p)$. 
The projection function finds the subset of the initial state for constructing the plan, to ensure that other plans are not feasible in the teammate's model.
Then we satisfy sufficiency conditions for the plan using $\joe$. From lines 7-10, we handle every effect in $a_p$ that might be provided by other actions. The negation of the pre-condition from these actions is added to the initial and goal state.
From lines 11-13, the algorithm checks if an action in the plan threatens $p$. The threat is handled by removing the action from the plan and adding its constraints to the initial state.
Finally, it reevaluates whether the updated initial and goal state constructs the plan in the constrained model or not. If the plan still contains the action $a_p$ then the query is to validate the plan $\pi_q$, instead of asking them to generate the plan. This solution follows the complete analysis to ensure a sufficient and minimal response from the teammate.
The agent can ask the query in any order due to PIP and sufficiency conditions because other constraints do not affect the current query for any predicate. However, the sequence was derived from the graph-plan planning graph for $\con$, based on the action closer to the initial state.




    

\section{Empirical Evaluation}
\begin{table}[]
\small
\centering
\begin{tabular}{c|c|c|c|c|c|c}
\toprule
     & $|\diamond p|$ & $|\mathcal{Q}|$ & Val & Plan & Templ & Time \\ \midrule
    \multirow{3}{*}{Blocks} & 4 & 3.7 & 1.9 & 0.7 & 1.1 & 1.79 \\ \cmidrule{2-7}
    & 6 & 5.4 & 3.1 & 0.8 & 1.5 & 5.62 \\ \cmidrule{2-7}
    & 8 & 7.4 & 3.8 & 1.1 & 2.5 & 12.24 \\ \midrule
    \multirow{4}{*}{Rover} & 4 & 3.8 & 2.0 & 0.8 & 1.0 & 2.21 \\ \cmidrule{2-7}
    & 6 & 5.6 & 3.1 & 0.8 & 1.5 & 7.89 \\ \cmidrule{2-7}
    & 8 & 7.3 & 4.0 & 1.5 & 1.8 & 13.24 \\ \cmidrule{2-7}
    & 10 & 9.4 & 4.8 & 2.4 & 2.2 & 29.53 \\ \midrule
    \multirow{4}{*}{Satellite} & 4 & 3.7 & 1.8 & 0.8 & 1.1 & 2.11 \\ \cmidrule{2-7}
    & 6 & 5.5 & 2.9 & 1.2 & 1.4 & 6.48 \\ \cmidrule{2-7}
    & 8 & 7.4 & 3.9 & 1.8 & 1.7 & 13.69 \\ \cmidrule{2-7}
    & 10 & 9.3 & 4.8 & 2.5 & 2.0 & 25.55 \\ \midrule
    \multirow{3}{*}{ZenoTravel} & 4 & 3.7 & 1.8 & 0.9 & 1.0 & 2.05 \\ \cmidrule{2-7}
    & 6 & 5.4 & 3.0 & 1.0 & 1.4 & 5.93 \\ \cmidrule{2-7}
    & 8 & 7.4 & 3.9 & 1.5 & 2.0 & 12.97 \\ \bottomrule
\end{tabular}
\caption{Comparison of different types of Queries}
\label{tab:query_diff}
\end{table}
We have theoretically discussed the process of generating questions. The question generation method uses APDDL parser \cite{nguyen2013synthesizing} based on PDDLPy\footnote{https://pypi.org/project/pddlpy/}, and an optimal planner Fast downward \cite{helmert2006fast} to solve the planning problems. The results reported are from experiments run on a 12 core Intel(R) Xeon(R) CPU with an E5-2643 v3 @3.40GHz processor and a 64G RAM.
The experiments were performed on -- rover, blocksworld, satellite, and zenotravel\footnote{https://github.com/potassco/pddl-instances}. 
The IPC domains were the correct human model, and randomly chosen predicates were assumed as possible predicates. An equal number of predicates were added to the actions based on the parameters for the action. 
Special care was taken, to ensure that the extra predicate did not make the action impossible by adding a mutex to already available pre-conditions. Please note, that the predicates were randomly removed from the lifted domain, and asking a question about any grounded action will localize the human model in the lifted domain.

\subsubsection{Table \ref{tab:query_diff}}shows the evaluation for the different number of unknowns and the time taken (in seconds) to find the questions for the domains. The number of questions generated and the time are averaged over 10 different runs. The decrease in the questions is because some predicates were merged using the templates. Except for one case in Rover (with 8 unknown predicates), where we were able to find two different merging templates (thus total questions became 6), we usually had roughly 1 question decrease in the problems. The average number of queries are presented in the table for each domain. The algorithm needs to solve multiple planning problems, but due to PIP query generation can be executed offline. All the queries were first constructed and then validated with the human model (correct IPC domain). The table shows that roughly half of the queries were through validation that was higher than our expectations.

\begin{table}[]
\centering
\begin{tabular}{c|c|c|c|c|c}
\toprule
    Objects & $|\mathcal{Q}|$ & Val & Plan & Templ & Time \\\midrule
    8 & 5.6 & 3.1 & 0.8 & 1.5 & 7.89 \\ \midrule
    15 & 5.7 & 3.1 & 1.0 & 1.4 & 8.02 \\ \midrule
    22 & 5.6 & 3.0 & 1.3 & 1.3 & 8.45 \\ \midrule
    29 & 5.5 & 3.2 & 1.1 & 1.3 & 9.11 \\ \bottomrule
\midrule
\end{tabular}
\caption{Effect of $\mathcal{I}_e$ on the Queries for Rover domain}
\label{tab:object_comparison}
\end{table}
\subsubsection{Table \ref{tab:object_comparison}}shows varying the initial state condition on question generation. We constructed new initial states by adding objects to the environment. We randomly removed three and added three different predicates in the lifted domain and generated questions using different initial states. The time and number of questions were averaged over ten different random selections. We expected to observe an effect on time due to extra objects for the time taken to solve multiple planning problems. But, since queries were constructed using Graphplan Planning graph, we did not see any change in time, just a very small increase. It shows that the size of the initial state does not affect the queries, whereas the causal structure of the domain does.






\section{Related Work}
Our work of asking directed questions for model localization, and understanding what every response from the teammate could mean has been motivated by the Intelligent Tutoring System community. But the idea of learning models from data points with specific queries or plan traces has been applied in active learning as well as learning models from plan traces (behavior) for the environment.

\subsubsection{Intelligent Tutoring System} as a community is working towards maximizing the learning of the students for procedural knowledge. Their central goal is to provide a teacher to every student, and the biggest challenge for them is to understand the model of the student from the work they do and provide feedback or new questions to them.
The process of generating questions for students has been used in the past \cite{zhang2016machine}, and they have also used structured knowledge bases to generate more meaningful questions for concepts like photosynthesis \cite{zhang2017adaptively}.
The modeling scheme in ITS is shallow where they represent the knowledge of any concept as a hidden variable using HMM \cite{knowledgetracing}. Parameters learning using sequential data of student's interaction for HMM  \cite{grover2018should} or deep neural networks \cite{piech2015deep}.
Based on the learned models, they have also tried dynamic policies to present questions to students using multi-armed bandits \cite{clement2014online}. 
Our work differs from the ITS community as we are learning a detailed human model for collaboration. We can see this as the first step towards having informative interaction with the user to improve collaboration with them.

\subsubsection{Learning planning model using traces.}
There has been some work to learn the planning models using the behavior in the environment using state predicate differences \cite{gil1994learning,stern2017efficient}, weighted max-sat \cite{yang2007learning} and finite state machines \cite{cresswell2009acquisition,cresswell2011generalised}. Author's \cite{zhuo2020discovering} used deep neural networks to learn shallow machine learning model and use it to predict behavior on the test set. 
There have been other structured formulations such as Linear Temporal Logic (LTL) to represent the knowledge and learning the model using behavior trajectories and an oracle to validate the model \cite{camacho2019learning}.
In \cite{bryce2016maintaining}, authors use a questioning strategy to decrease the number of particles to find how the model has changed from the original behavior. The response to the query is in the form of -- model provided by the user, labeled valid plan, or some specific predicate that is part of the model now. 
Recently, there has been some work to infer the model by asking specific queries in the form of the initial state and the plan \cite{verma2020asking}. Their work differs from our condition as they assume detailed responses of up to which step the plan can be executed in the robot's model, which can easily overwhelm the human teammate (due to interrogative nature).

\subsubsection{Active Learning} has an oracle to question classes of specific data points \cite{settles2009active}. The community learns the underlying model with the help of an all-knowing oracle. The difference with the field is that the data of plan traces is not readily available to the agent, such as in scenarios of stream-based selective sampling \cite{cohn1994neural}. Stream-based active learning ideas \cite{dagan1995committee} are useful for the continuous space of probability distribution but can't be used directly in discrete space of questions, where we instead have the generative model for the traces.

As we can see, the idea of questioning the user (or an oracle) for relevant information is not new. In this paper, we have looked at how it is useful for human-robot teaming. The essential part is to understand how to formally define interaction in the form of question and answer and generate useful queries to localize the model, and the application to the novel area comes with its different challenges.



\section{Conclusion and Future Work}
Through this paper, we have shown how to construct queries with uncertainty in the model. The robot expects simpler answers such as -- (1) yes/no response, and (2) if possible, a complete plan to decrease the number of interactions.
It also evaluates different conditions under which each query will have a response that can refine the set of models. The construction using PIP ensures that every question can be asked in any order and these queries can be constructed offline (provided it knows the set of possible models).
The evaluations show that these questions can be constructed for any unknown predicate in the model.

In the future, we want to analyze a set of plans rather than the optimal plan. It becomes a two-stepped process, a base framework which involves creating questions, and getting information from the response of the user even if that response does not involve the action $a_p$ but still reaches the goal. The agent has to optimize using the value of information to generate queries, instead of assuming an optimal response from the teammate. This analysis would be useful in general decision-making scenarios and can lead to an open-ended discussion with automated agents.

\subsubsection{Acknowledgements} Kambhampati’s research is supported in part by ONR grants N00014-16-1-2892, N00014-18-1-2442, N00014-18-1-2840, N00014-19-1-2119, AFOSR grant FA9550-18-1-0067, DARPA SAIL-ON grant W911NF-19-2-0006, NSF grants 1936997 (C-ACCEL), 1844325, and a NASA grant NNX17AD06G.

\balance
\bibliography{ref}
\bibliographystyle{aaai} 

\end{document}